\documentclass[10pt,twocolumn,letterpaper]{article}
\usepackage{geometry}
\usepackage{iccv}
\usepackage{times}
\usepackage{epsfig}
\usepackage{graphicx}
\usepackage{amsmath}
\usepackage{amsthm}
\usepackage{amssymb}
\usepackage{bm}
\usepackage{subfigure}
\usepackage[linesnumbered, ruled]{algorithm2e}
\usepackage{booktabs}
\usepackage{multirow}
\usepackage{array}
\usepackage{enumitem}
\usepackage{float}
\usepackage[noblocks]{authblk}
\newtheorem{prop}{Proposition}



\graphicspath{{./}}


\usepackage[breaklinks=true,bookmarks=false]{hyperref}

\iccvfinalcopy 


\newcommand{\Koven}[1]{{\color{black}#1}}
\newcommand{\modify}[1]{{\color{black}#1}}

\newcommand{\ws}[1]{{\color{black}#1}}
\newcommand{\final}[1]{{\color{black}#1}}
\newcommand{\erhao}{\fontsize{21pt}{\baselineskip}\selectfont}

\ificcvfinal\pagestyle{empty}\fi
\begin{document}
\newgeometry{top=6cm,bottom=1cm}

\onecolumn{

\noindent \textbf{\erhao{Cross-view Asymmetric Metric Learning for Unsupervised Person Re-identification}}

\vspace{2cm}

\noindent {\LARGE{Hong-Xing Yu, Ancong Wu, Wei-Shi Zheng}}

\Large
\vspace{2cm}

\noindent Code is available at the project page: \\
\ \ \ \ \ \ \ \ \ \ \ \ \url{https://github.com/KovenYu/CAMEL}

\vspace{1cm}

\noindent For reference of this work, please cite:

\vspace{1cm}
\noindent Hong-Xing Yu, Ancong Wu, Wei-Shi Zheng.
``Cross-view Asymmetric Metric Learning for Unsupervised Person Re-identification.''
\emph{Proceedings of the IEEE International Conference on Computer Vision.} 2017.

\vspace{1cm}

\noindent Bib:\\
\noindent
@inproceedings\{yu2017cross,\\
\ \ \   title=\{Cross-view Asymmetric Metric Learning for Unsupervised Person Re-identification\},\\
\ \ \  author=\{Yu, Hong-Xing and Wu, Ancong and Zheng, Wei-Shi\},\\
\ \ \  booktitle=\{Proceedings of the IEEE International Conference on Computer Vision\},\\
\ \ \  year=\{2017\}\\
\}

}

\clearpage

\newpage
\restoregeometry

\twocolumn[{\title{\ws{Cross-view Asymmetric Metric Learning for \\ Unsupervised Person Re-identification}}

\author[ ]{\vspace{-0.5cm}Hong-Xing Yu$^{1,5}$}
\author[ ]{Ancong Wu$^{2}$}
\author[ ]{Wei-Shi Zheng$^{1,3,4}$\thanks{Corresponding author}\vspace{-0.3cm}}

\affil[ ]{$^{1}$School of Data and Computer Science, Sun Yat-sen University, China}
\affil[ ]{$^{2}$School of Electronics and Information Technology, Sun Yat-sen University, China}
\affil[ ]{$^{3}$Key Laboratory of Machine Intelligence and Advanced Computing, Ministry of Education, China}
\affil[ ]{$^{4}$Collaborative Innovation Center of High Performance Computing, NUDT, China}
\affil[ ]{$^{5}$Guangdong Key Laboratory of Big Data Analysis and Processing,  Guangzhou, China}
\affil[ ]{\tt\small xKoven@gmail.com, wuancong@mail2.sysu.edu.cn, wszheng@ieee.org}

\maketitle
}]
\begin{abstract}
While metric learning is important for Person re-identification (RE-ID), a significant problem in visual surveillance for cross-view pedestrian
matching, existing metric models for RE-ID are mostly based on supervised learning that requires quantities of labeled samples in all pairs of camera views for training.
However, this limits their scalabilities to realistic applications, in which a large amount of data
over multiple disjoint camera views is available but not labelled.
To overcome the problem, we propose unsupervised asymmetric metric learning for unsupervised RE-ID. Our model aims to learn an asymmetric metric, i.e., specific projection for each view,
based on asymmetric clustering on cross-view person images.
Our model finds a shared space where view-specific bias is alleviated and thus
better matching performance can be achieved.
Extensive experiments have been conducted on a baseline and five large-scale RE-ID datasets to demonstrate
the effectiveness of the proposed model. Through the comparison, we show that our model works much more suitable for unsupervised RE-ID compared to classical unsupervised metric learning models.
We also compare with existing unsupervised RE-ID methods,
and our model outperforms them with notable margins. Specifically, we report the results on large-scale unlabelled RE-ID dataset, which is important but unfortunately less concerned in literatures.
\end{abstract}
\section{Introduction}\label{Sec1}

\thispagestyle{empty}

Person re-identification (RE-ID) is a challenging problem focusing on pedestrian
matching and ranking across non-overlapping camera views. It remains an open
problem although it has received considerable exploration recently, in consideration of its potential
significance in security applications, especially in the case of video surveillance.
It has not been solved yet principally because of the dramatic intra-class variation
and the high inter-class similarity.
Existing attempts mainly focus on learning to extract robust and discriminative
representations
\cite{2014_ECCV_SCNCD,2014_IVC_KBICOV, 2015_CVPR_LOMO},
and learning matching functions or metrics
\cite{2011_CVPR_PRDC,2012_CVPR_KISSME,2013_CVPR_LADF,2014_ICDSC_KCCA,2015_CVPR_LOMO,2015_ICCV_MLAPG, 2015_ICCV_CSL}
in a supervised manner. Recently, deep learning has been adopted to RE-ID community
\cite{2015_CVPR_Ahmed,2016_CVPR_JSTL,2016_CVPR_Wang, 2016_ECCV_Gated}
and has gained promising results.

However, supervised strategies are intrinsically limited due to the requirement
of manually labeled cross-view training data, which is very expensive \cite{2015_TCSVT_xiaojuan}.
In the context of RE-ID,
the limitation is even pronounced because \emph{(1)} manually labeling may not be reliable
with a huge number of images to be checked across multiple camera views, and more importantly \emph{(2)} the astronomical
cost of time and money is prohibitive to label the overwhelming amount of data across disjoint camera views.
Therefore, in reality supervised methods would be restricted
when applied to a new scenario with a huge number of unlabeled data.

\begin{figure}\label{FigTitle}
\includegraphics[width=1\linewidth]{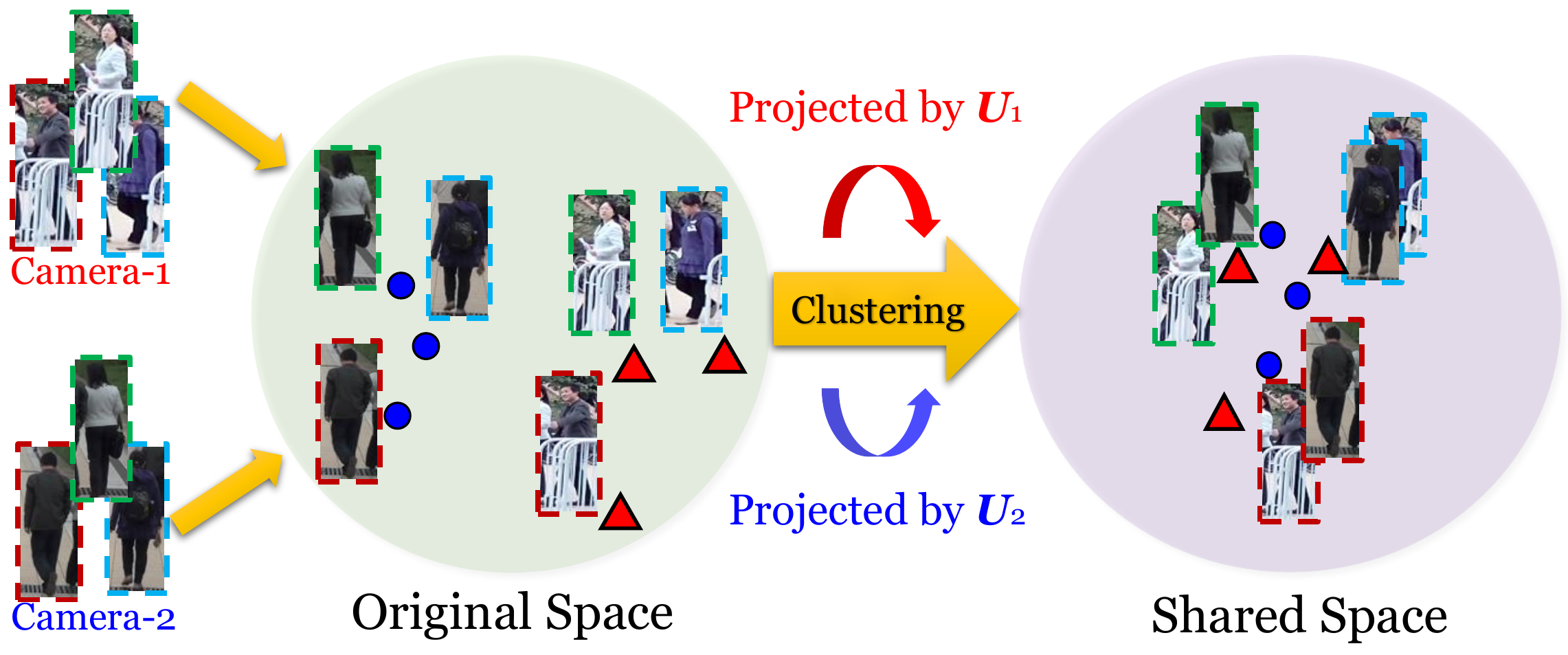}
\caption{Illustration of view-specific interference/bias and our idea.
Images from different cameras suffer from
view-specific interference, such as occlusions in Camera-1,
dull illumination in Camera-2, and the change of viewpoints between them.
These factors introduce bias in the original feature space, and therefore
unsupervised re-identification is extremely challenging. Our model
structures data by clustering and learns view-specific projections
jointly, and thus finds a shared space where view-specific bias is
alleviated and better performance can be achieved. (Best viewed in color)
}
\end{figure}

\ws{To directly make full use of the cheap and valuable unlabeled data,
some existing efforts on exploring unsupervised strategies
\cite{2010_CVPR_SDALF,2013_CVPR_SALIENCE,2014_BMVC_GTS, 2015_BMVC_DIC,2015_PAMI_ISR,2016_CVPR_tDIC, 2016_ICIP_Wang, 2016_ECCV_Kodirov} have been reported,}
but they are still not very satisfactory.
One of the main reasons is that without the help of labeled data,
it is rather difficult to model the dramatic variances
across camera views, such as the variances of illumination and occlusion conditions.
Such variances lead to view-specific interference/bias which can be very disturbing in finding
what is more distinguishable in matching people across views (see Figure \ref{FigTitle}).
In particular, existing unsupervised models treat the samples from different views in the same manner,
and thus the effects of view-specific bias could be overlooked.

In order to better address the problems \ws{caused by camera view changes} in unsupervised RE-ID scenarios, we propose a novel
unsupervised RE-ID model named \emph{Clustering-based Asymmetric\footnote{\final{``Asymmetric'' means specific transformations for each camera view.}} MEtric Learning (CAMEL)}.
The ideas behind are on the two \ws{following} considerations. \ws{First, although}
conditions can vary among camera views, we assume that there should be some shared space
where the data representations are less affected by view-specific bias.
By projecting original data into the shared space, the distance between any pair of
samples $\mathbf{x}_i$ and $\mathbf{x}_j$ is computed as:
\begin{equation}\label{EqSym}
 \small
d(\mathbf{x}_i,\mathbf{x}_j) = \lVert \bm{U}^{\mathrm{T}}\mathbf{x}_i - \bm{U}^{\mathrm{T}}\mathbf{x}_j \rVert_2
= \sqrt{(\mathbf{x}_i-\mathbf{x}_j)^{\mathrm{T}}\bm{M}(\mathbf{x}_i-\mathbf{x}_j)},
\end{equation}
where $\bm{U}$ is the transformation matrix and $\bm{M} = \bm{U}\bm{U}^{\mathrm{T}}$.
\Koven{However, it can be hard for a universal transformation to implicitly model the view-specific feature distortion from different camera views,
especially when we lack label information to guide it.
This motivates us to \emph{explicitly} model the view-specific bias.
Inspired by the supervised asymmetric distance model \cite{2015_TCSVT_ASM},
we propose to embed the asymmetric metric learning to our unsupervised RE-ID modelling,
and thus modify the symmetric form in Eq. (\ref{EqSym}) to an asymmetric one:}
\begin{equation}\label{EqAsym}
\small
d(\mathbf{x}_i^p,\mathbf{x}_j^q) = \lVert \bm{U}^{p\mathrm{T}}\mathbf{x}_i^p - \bm{U}^{q\mathrm{T}}\mathbf{x}_j^q \rVert_2,
\end{equation}
where $p$ and $q$ are indices of camera views.

An asymmetric metric is more acceptable for unsupervised RE-ID scenarios as
it explicitly models the variances among views by treating each view differently.
By such an explicit means, we are able to better alleviate the disturbances of view-specific
bias.

The other consideration is that since we are not clear about how to separate similar persons
in lack of labeled data, it is reasonable to pay more attention to
better separating dissimilar ones.
Such consideration \ws{motivates} us to structure our data by clustering.
Therefore, we develop \emph{asymmetric metric clustering} that clusters cross-view person images.
By clustering together with asymmetric modelling, the data can be better characterized in the shared space,
contributing to better matching performance (see Figure \ref{FigTitle}).

In summary, the proposed CAMEL aims to learn view-specific projection for each camera view
by jointly learning the asymmetric metric and
seeking \ws{optimal} cluster separations.
In this way, the data from different views is projected into
a shared space where view-specific bias is aligned to an extent, and thus better performance
of cross-view matching can be achieved.

\ws{So far in literatures, the unsupervised RE-ID models have only been evaluated on small datasets which contain only hundreds or
a few thousands of images. However, in more realistic scenarios we need evaluations
of unsupervised methods on much larger datasets, say, consisting of hundreds of thousands of samples,
to validate their scalabilities. In our experiments, we have conducted extensive comparison on datasets
with their scales ranging widely.
In particular, we combined two existing RE-ID datasets \cite{2015_ICCV_MARKET,MARS}
to obtain a larger one which contains over 230,000 samples.
Experiments on this dataset (see Sec. \ref{SecFurtherEval}) show empirically that our model is more scalable to problems of larger scales,
which is more realistic and more meaningful for unsupervised RE-ID models, while some existing unsupervised RE-ID models are not scalable due to the expensive cost in either storage or computation.}



\section{Related Work}\label{Sec2}

At present, most existing RE-ID models are in a supervised manner. They are mainly
based on learning distance metrics or subspace
\cite{2011_CVPR_PRDC,2012_CVPR_KISSME,2013_CVPR_LADF,2014_ICDSC_KCCA,2015_CVPR_LOMO,2015_ICCV_MLAPG, 2015_ICCV_CSL},
learning view-invariant and discriminative features
\cite{2014_ECCV_SCNCD,2014_IVC_KBICOV, 2015_CVPR_LOMO},
and deep learning frameworks
\cite{2015_CVPR_Ahmed,2016_CVPR_JSTL,2016_CVPR_Wang, 2016_ECCV_Gated}.
However, all these models rely on substantial labeled training data, which is typically required
to be pair-wise for each pair of camera views. Their performance depends highly on
the quality and quantity of labeled training data.
In contrast, our model does not require any labeled data and thus is free from
prohibitively high cost of manually labeling and the risk of incorrect labeling.

\ws{To directly utilize unlabeled data for RE-ID, several unsupervised RE-ID models \cite{2013_CVPR_SALIENCE,2014_BMVC_GTS,2015_PAMI_ISR,2015_BMVC_DIC,2016_CVPR_tDIC}
have been proposed}.
All these models differ from ours in two aspects.
On the one hand, these models do not explicitly exploit the information on
view-specific bias, i.e., they treat feature transformation/quantization in every distinct camera view in the same manner
when modelling. In contrast, our model tries to learn specific transformation
for each camera view, aiming to find a shared space where view-specific interference
can be alleviated and thus better performance can be achieved.
On the other hand, as for the means to learn a metric or a transformation,
existing unsupervised methods for RE-ID rarely consider clustering while
we introduce an asymmetric metric clustering to characterize data in the learned space. \ws{While the methods proposed in \cite{2015_TCSVT_ASM, 2013_AVSS_RCCA,2015_TCSVT_RCCA} could
also learn view-specific mappings, they are supervised methods and more importantly cannot be generalized to handle unsupervised RE-ID.}


Apart from our model, there have been some clustering-based metric learning models
\cite{2007_CVPR_AML,2015_NC_uNCA}. However, to our best knowledge, there is no such
attempt in RE-ID community before.
This is potentially because clustering is more susceptible to view-specific interference
and thus data points from the same view are more inclined to be clustered together,
instead of those of a specific person across views.
Fortunately, \ws{by formulating asymmetric learning and further limiting the discrepancy between view-specific transforms}, this problem can be
alleviated in our model. Therefore, our model is essentially different from these models
not only in formulation but also
in that our model is able to better deal with cross-view matching problem by treating
each view asymmetrically. We will discuss the differences between our model and the
existing ones in detail in Sec. \ref{SecFairCmp}.

\section{Methodology}

\subsection{Problem Formulation}

Under a conventional RE-ID setting, suppose we have a surveillance camera network that
consists of $V$ camera views, from each of which we have collected
$N_p\;(p = 1,\cdots,V)$ images and thus there are $N = N_1+\cdots+N_V$ images in total as training samples.

Let \modify{ $\bm{X} = [\mathbf{x}_1^1,\cdots,\mathbf{x}_{N_1}^1,\cdots,\mathbf{x}_{1}^V,\cdots,\mathbf{x}_{N_V}^V]\in \mathbb{R}^{M \times N}$}
denote the training set, with each column $\mathbf{x}_i^p$ $(i = 1,\cdots,N_p; p = 1,\cdots,V)$
corresponding to an $M$-dimensional representation of the $i$-th image from the $p$-th
camera view.
Our goal is to learn $V$ mappings i.e., $\bm{U}^1,\cdots,\bm{U}^V$,
where $\bm{U}^p \in \mathbb{R}^{M \times T} (p = 1,\cdots,V)$,
corresponding to each camera view,
and thus we can project the original representation $\mathbf{x}_i^p$
from the original space $\mathbb{R}^M$
into a shared space $\mathbb{R}^T$
in order to alleviate the view-specific interference.

\subsection{Modelling}\label{Sec3}

Now we are looking for some transformations to map our data
into a shared space where we can better separate the
images of one person from those of different persons.
Naturally, this goal can be achieved by narrowing intra-class discrepancy and meanwhile
pulling the centers of all classes away from each other.
In an unsupervised scenario, however, we have no labeled data to tell our model
how it can exactly distinguish one person from another who has a confusingly similar
appearance with him.
Therefore, it is acceptable to relax the original idea:
we focus on gathering similar person images together, and hence separating relatively dissimilar ones.
Such goal can be modelled by minimizing an objective function like that of $k$-means
clustering \cite{KMEANS}:
\begin{equation}\label{Eq0}
 \small
\begin{aligned}
\mathop{\min}_{\bm{U}}\mathcal{F}_{intra}= \sum_{k=1}^K \sum_{i \in {\mathcal{C}_k}} \lVert \bm{U}^{\mathrm{T}}\mathbf{x}_i - \mathbf{c}_k \rVert^2,
\end{aligned}
\end{equation}
where $K$ is the number of clusters,
$\mathbf{c}_k$ denotes the centroid of the $k$-th cluster and
$\mathcal{C}_k = \{ i | \bm{U}^{\mathrm{T}}\mathbf{x}_i \in k$-th cluster$\}$.

However, clustering results may be affected extremely
by view-specific bias when applied in cross-view problems.
In the context of RE-ID, the feature distortion could be view-sensitive due to view-specific interference like
different lighting conditions and occlusions \cite{2015_TCSVT_ASM}.
Such interference
might be disturbing or even dominating in searching the similar person images across views during
clustering procedure. To address this cross-view problem,
we learn specific projection for each view rather than a universal one
to explicitly model the effect of view-specific interference and to alleviate it.
Therefore, the idea can be further formulated
by minimizing an objective function below:
\begin{equation}\label{Eq1}
 \small
\begin{aligned}
\mathop{\min}_{\bm{U}^1,\cdots,\bm{U}^V}\mathcal{F}_{intra}= &\sum_{k=1}^K \sum_{i \in {\mathcal{C}_k}} \lVert \bm{U}^{p\mathrm{T}}\mathbf{x}_i^p - \mathbf{c}_k \rVert^2\\
s.t.\qquad \bm{U}^{p\mathrm{T}}&\bm{\Sigma}^p\bm{U}^p = \bm{I} \quad (p = 1,\cdots,V),
\end{aligned}
\end{equation}
where the notation is similar to Eq. (\ref{Eq0}), with
$p$ denotes the view index,
$\bm{\Sigma}^p = \bm{X}^p\bm{X}^{p\mathrm{T}}/ N_p + \alpha \bm{I}$ and $\bm{I}$ represents the identity matrix
which avoids singularity of the covariance matrix.
The transformation $\bm{U}^p$ that corresponds to each instance $\mathbf{x}_i^p$ is determined
by the camera view which $\mathbf{x}_i^p$ comes from.
The quasi-orthogonal constraints on $\bm{U}^p$ ensure that the model will
not simply give zero matrices. By combining the asymmetric metric learning, we actually realize an asymmetric metric clustering on RE-ID data across camera views.

Intuitively, if we minimize this objective function directly,
$\bm{U}^p$ will largely depend on the data distribution
from the $p$-th view. Now that there is specific bias on each view,
any $\bm{U}^p$ and $\bm{U}^q$ could be arbitrarily different.
This result is very natural,
but large inconsistencies among the learned transformations are
not what we exactly expect,
because the transformations are with respect to person images from different views: they are inherently correlated and homogeneous.
More critically, largely different projection basis pairs would fail to
capture the discriminative nature of cross-view images, producing an even worse
matching result.

Hence, to strike a balance between the ability to capture discriminative nature and
the capability to alleviate view-specific bias, we embed a cross-view consistency regularization term
into our objective function. And then, in consideration of better tractability,
we divide the intra-class term by its scale $N$, so that the regulating parameter
would not be sensitive to the number of training samples.
Thus, our optimization task becomes
\modify{
\begin{equation}\label{Eq2}
  \small
\begin{aligned}
\mathop{\min}_{\bm{U}^1,\cdots,\bm{U}^V} \mathcal{F}_{obj} = \frac{1}{N}&\mathcal{F}_{intra} + \lambda\mathcal{F}_{consistency} \\
= \frac{1}{N}\sum_{k=1}^K &\sum_{i \in {\mathcal{C}_k}} \lVert \bm{U}^{p\mathrm{T}}\mathbf{x}_i^p - \mathbf{c}_k \rVert^2
 +\lambda \sum_{p\neq q} \lVert \bm{U}^p-\bm{U}^q\rVert_F^2 \\
 s.t.\qquad &\bm{U}^{p\mathrm{T}}\bm{\Sigma}^p\bm{U}^p = \bm{I} \quad (p = 1,\cdots,V),
\end{aligned}
\end{equation}}
where $\lambda$ is the cross-view regularizer and $\lVert\cdot\rVert_F$ denotes the Frobenius norm
of a matrix. We call the above model the \emph{Clustering-based Asymmetric MEtric Learning (CAMEL)}.

To illustrate the differences between symmetric and asymmetric metric clustering in structuring data
in the RE-ID problem,
we further show the data distributions in Figure \ref{FigP}.
We can observe from Figure \ref{FigP} that the view-specific
bias is obvious in the original space: triangles in the upper left and circles in the lower right.
In the common space
learned by symmetric metric clustering, the bias is still obvious.
In contrast, in the shared space learned by asymmetric metric clustering,
the bias is alleviated and thus the data is better characterized according to the identities
of the persons, i.e., samples of one person (one color) gather together into a cluster.

\begin{figure}[t]
\hspace{-1ex}
\subfigure[Original]{
\includegraphics[width=0.33\linewidth]{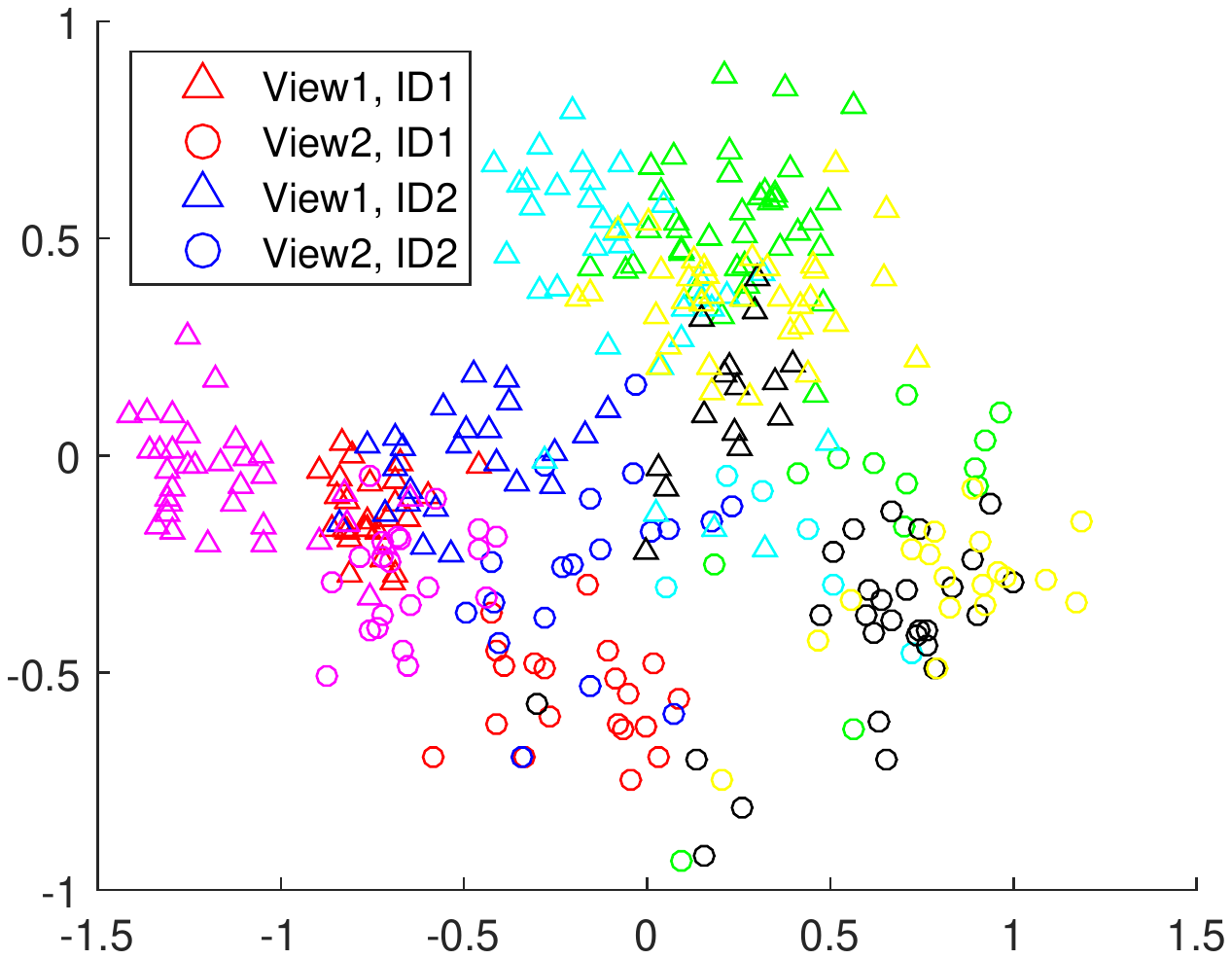}
}
\hspace{-2.5ex}
\subfigure[Symmetric]{
\includegraphics[width=0.33\linewidth]{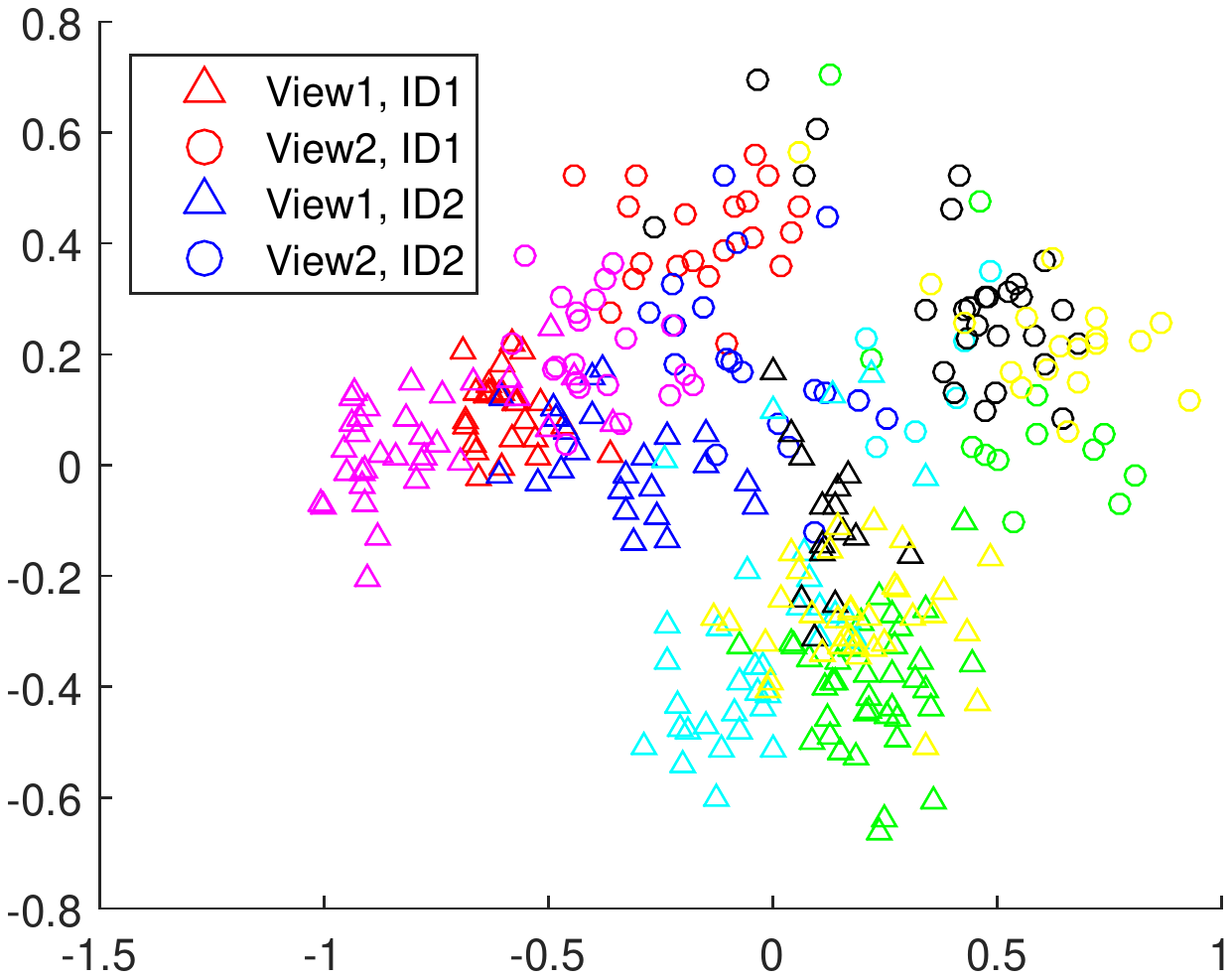}
}
\hspace{-2.5ex}
\subfigure[Asymmetric]{
\includegraphics[width=0.33\linewidth]{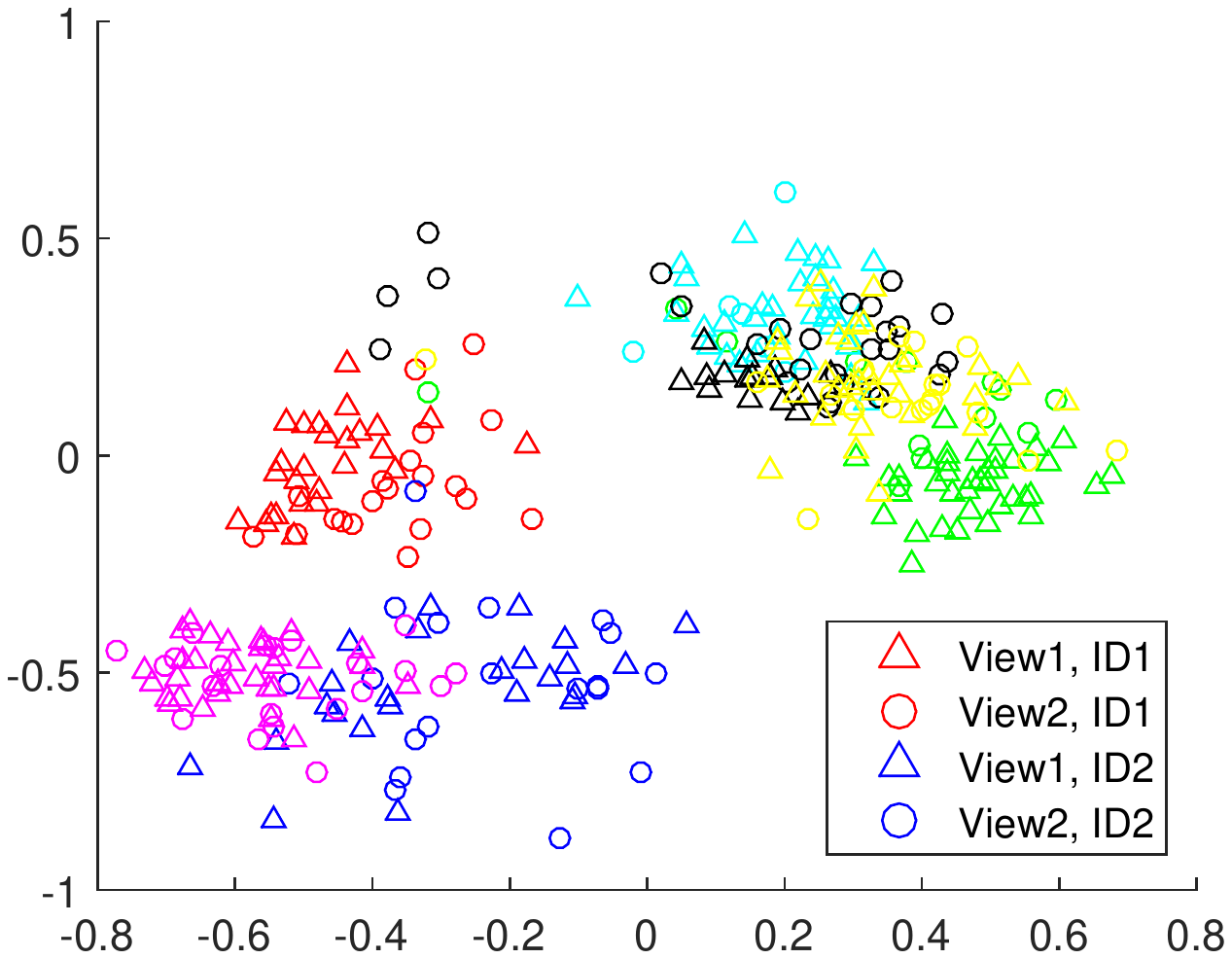}
}
\caption{\label{FigP}Illustration of how symmetric and asymmetric metric clustering structure data
using our method for the unsupervised RE-ID problem. The samples are from the SYSU dataset \cite{2015_TCSVT_ASM}.
We performed PCA for visualization. One shape (triangle or circle) stands for samples from one view, while one color indicates samples of one person.
(a) Original distribution (b) distribution in the common space learned by symmetric metric clustering
(c) distribution in the shared space learned by asymmetric metric clustering. (Best viewed in color)}
\end{figure}

\subsection{Optimization}

For convenience, we denote $\mathbf{y}_i=\bm{U}^{p\mathrm{T}}\mathbf{x}_i^p$. Then we have $\bm{Y} \in \mathbb{R}^{T \times N}$,
where each column $\mathbf{y}_i$
corresponds to the projected new representation of that from $\bm{X}$. For optimization, we rewrite our objective function in a more compact form.
The first term can be rewritten as follow \cite{NMF}:
\begin{equation}\label{Eq3}
 \small
\begin{aligned}
\frac{1}{N}\sum_{k=1}^K \sum_{i \in {\mathcal{C}_k}} \lVert \mathbf{y}_i - \mathbf{c}_k \rVert^2
=\frac{1}{N}[\mathrm{Tr}(\bm{Y}^{\mathrm{T}}\bm{Y})-\mathrm{Tr}(\bm{H}^{\mathrm{T}}\bm{Y}^{\mathrm{T}}\bm{YH})], \\
\end{aligned}
\end{equation}
where
\begin{equation}\label{EqH}
 \small
\bm{H} =
\begin{bmatrix}
\mathbf{h}_1,...,\mathbf{h}_K
\end{bmatrix}
,\quad \mathbf{h}_k^{\mathrm{T}}\mathbf{h}_l =
\begin{cases}
0 & k\neq l \\
1 & k= l
\end{cases}
\end{equation}
\begin{equation}\label{EqColH}
 \small
\mathbf{h}_k =
\begin{bmatrix}
0,\cdots,0,1,\cdots,1,0,\cdots,0,1,\cdots
\end{bmatrix}
^{\mathrm{T}}/\sqrt{n_k}
\end{equation}
is an indicator vector with the $i$-th entry corresponding to the instance $\mathbf{y}_i$,
indicating that $\mathbf{y}_i$ is in the $k$-th cluster if the corresponding entry does not equal zero.
Then we construct
\modify{
\begin{equation}
 \small
\widetilde {\bm{X}} =
\begin{bmatrix}
\mathbf{x}^1_1&\cdots&\mathbf{x}^1_{N_1}& \mathbf{0}& \cdots& \mathbf{0}& \cdots& \mathbf{0} \\
\mathbf{0}&\cdots&\mathbf{0}& \mathbf{x}^2_1&\cdots& \mathbf{x}^2_{N_2}& \cdots& \mathbf{0} \\
\vdots&\vdots&\vdots& \vdots&\vdots& \vdots& \vdots& \vdots \\
\mathbf{0}&\cdots&\mathbf{0}& \mathbf{0}&\cdots& \mathbf{0}& \cdots& \mathbf{x}^V_{N_V}
\end{bmatrix}
\end{equation}}
\begin{equation}
 \small
\widetilde {\bm{U}} =
\begin{bmatrix}
\bm{U}^{1\mathrm{T}}, \cdots, \bm{U}^{V\mathrm{T}}
\end{bmatrix}
^{\mathrm{T}}
,
\end{equation}
so that
\begin{equation}\label{EqY}
 \small
\bm{Y} = \widetilde{\bm{U}}^{\mathrm{T}}\widetilde{\bm{X}},
\end{equation}
and thus Eq. (\ref{Eq3}) becomes
\begin{equation}
 \small
\begin{aligned}
&\frac{1}{N}\sum_{k=1}^K \sum_{i \in {\mathcal{C}_k}} \lVert \mathbf{y}_i - \mathbf{c}_k \rVert^2 \\
=&\frac{1}{N}\mathrm{Tr}(\widetilde {\bm{X}}^{\mathrm{T}}\widetilde {\bm{U}}\widetilde {\bm{U}}^{\mathrm{T}}\widetilde {\bm{X}})
-\frac{1}{N}\mathrm{Tr}({\bm{H}}^{\mathrm{T}}\widetilde {\bm{X}}^{\mathrm{T}}\widetilde {\bm{U}}\widetilde {\bm{U}}^{\mathrm{T}}\widetilde {\bm{X}}\bm{H}).
\end{aligned}
\end{equation}

As for the second term, we can also rewrite it as follow:
\begin{equation}
 \small
\lambda \sum_{p\neq q} \lVert \bm{U}^p-\bm{U}^q\rVert_F^2 = \lambda\mathrm{Tr}(\widetilde{\bm{U}}^{\mathrm{T}}\bm{D\widetilde U}),
\end{equation}
where
\begin{equation}
 \small
\bm{D} =
\begin{bmatrix}
(V-1)\bm{I}& -\bm{I}& -\bm{I}&\cdots &-\bm{I} \\
-\bm{I}& (V-1)\bm{I}& -\bm{I}&\cdots &-\bm{I} \\
\vdots&\vdots&\vdots&\vdots&\vdots \\
-\bm{I}&  -\bm{I}& -\bm{I}& \cdots&(V-1)\bm{I}
\end{bmatrix}.
\end{equation}
Then, it is reasonable to relax the constraints
\begin{equation}
 \small
\bm{U}^{p\mathrm{T}}\bm{\Sigma}^p\bm{U}^p = \bm{I} \quad (p = 1,\cdots,V)
\end{equation}
to
\begin{equation}
 \small
\sum_{p=1}^V \bm{U}^{p\mathrm{T}}\bm{\Sigma}^p\bm{U}^p = \widetilde {\bm{U}}^{\mathrm{T}}\widetilde{\bm{\Sigma}}\widetilde {\bm{U}} = V\bm{I},
\end{equation}
where $\widetilde{\bm{\Sigma}} = diag(\bm{\Sigma}^1, \cdots, \bm{\Sigma}^V)$
because what we expect is to prevent each $\bm{U}^p$ from shrinking to a zero matrix.
The relaxed version of constraints is able to satisfy our need, and it
bypasses trivial computations.

By now we can rewrite our optimization task as follow:
\begin{equation}\label{optFinal}
 \small
\begin{aligned}
\mathop{\min}_{\widetilde{\bm{U}}}\mathcal{F}_{obj} &=
\frac{1}{N}\mathrm{Tr}(\widetilde {\bm{X}}^{\mathrm{T}}\widetilde {\bm{U}}\widetilde {\bm{U}}^{\mathrm{T}}\widetilde {\bm{X}})
+\lambda\mathrm{Tr}(\widetilde{\bm{U}}^{\mathrm{T}}\bm{D\widetilde U}) \\
 &- \frac{1}{N}\mathrm{Tr}({\bm{H}}^{\mathrm{T}}\widetilde {\bm{X}}^{\mathrm{T}}\widetilde {\bm{U}}\widetilde {\bm{U}}^{\mathrm{T}}\widetilde {\bm{X}}\bm{H})
 \\
&s.t.\qquad \widetilde {\bm{U}}^{\mathrm{T}}\widetilde{\bm{\Sigma}}\widetilde {\bm{U}} = V\bm{I}.
\end{aligned}
\end{equation}

It is easy to realize from Eq. (\ref{Eq2}) that our objective function
is highly non-linear and non-convex. Fortunately, in the form of Eq. (\ref{optFinal})
we can find that once $\bm{H}$ is fixed, Lagrange's method can be applied to
our optimization task. And again from Eq. (\ref{Eq2}),
it is exactly the objective of $k$-means clustering once $\widetilde{\bm{U}}$ is fixed \cite{KMEANS}.
Thus, we can adopt an alternating algorithm to solve the optimization problem.

\noindent \textbf{Fix $\bm{H}$ and optimize $\widetilde{\bm{U}}$.} Now we see how we optimize $\widetilde{\bm{U}}$.
After fixing $\bm{H}$ and applying the method
of Lagrange multiplier, our optimization task (\ref{optFinal})
is transformed into an eigen-decomposition problem as follow:
\begin{equation}\label{EqEigenDe}
 \small
\bm{G}\mathbf{u} = \gamma \mathbf{u},
\end{equation}
where $\gamma$ is the Lagrange multiplier (and also is the eigenvalue here) and
\begin{equation}\label{EqG}
 \small
\bm{G} = \widetilde{\bm{\Sigma}}^{-1}(\lambda \bm{D}+\frac{1}{N}\widetilde{\bm{X}}\widetilde{\bm{X}}^{\mathrm{T}}-\frac{1}{N}\widetilde{\bm{X}}\bm{HH}^{\mathrm{T}}\widetilde{\bm{X}}^{\mathrm{T}}).
\end{equation}
Then, $\widetilde{\bm{U}}$ can be obtained by solving this eigen-decomposition problem.

\noindent \textbf{Fix $\widetilde{\bm{U}}$ and optimize $\bm{H}$.} As for the optimization of $\bm{H}$, we can simply fix $\widetilde{\bm{U}}$
and conduct $k$-means clustering in the learned space. Each column of $\bm{H}$,
$\mathbf{h}_k$, is thus constructed according to the clustering result.

Based on the analysis above, we can now propose the main algorithm
of CAMEL in Algorithm \ref{AlgCamel}. We set maximum iteration to 100.
After obtaining $\widetilde{\bm{U}}$, we decompose it back into $\{\bm{U}^1,\cdots,\bm{U}^V\}$.
The algorithm is guaranteed to convergence, as given in the following proposition:
\final{
\begin{prop}
In Algorithm \ref{AlgCamel}, $\mathcal{F}_{obj}$ is guaranteed to convergence.
\end{prop}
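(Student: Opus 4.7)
The plan is to treat Algorithm \ref{AlgCamel} as a standard block-coordinate descent scheme and verify the two classical ingredients for convergence of the objective value sequence: monotone non-increase across every sub-step, and a uniform lower bound on $\mathcal{F}_{obj}$.

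First I would establish boundedness from below. Both $\mathcal{F}_{intra}$ and $\mathcal{F}_{consistency}$ are sums of squared norms, so $\mathcal{F}_{obj}\ge 0$ for every feasible $(\widetilde{\bm{U}},\bm{H})$. This gives an absolute floor for the sequence of values produced by the alternating procedure, so it suffices to show that neither of the two updates can increase the objective.

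Next I would analyze the update of $\widetilde{\bm{U}}$ with $\bm{H}$ fixed. As shown in Eq.~(\ref{optFinal}), the subproblem is a quadratic form in $\widetilde{\bm{U}}$ subject to the relaxed constraint $\widetilde{\bm{U}}^{\mathrm{T}}\widetilde{\bm{\Sigma}}\widetilde{\bm{U}}=V\bm{I}$, and the eigen-decomposition in Eq.~(\ref{EqEigenDe})--(\ref{EqG}) returns the global minimizer of this relaxed problem (the stationary points of the Lagrangian coincide with the eigenvectors, and stacking the $T$ smallest eigenvalues delivers the minimum trace). Hence the updated $\widetilde{\bm{U}}^{(t+1)}$ satisfies $\mathcal{F}_{obj}(\widetilde{\bm{U}}^{(t+1)},\bm{H}^{(t)})\le \mathcal{F}_{obj}(\widetilde{\bm{U}}^{(t)},\bm{H}^{(t)})$. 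Then, for the update of $\bm{H}$ with $\widetilde{\bm{U}}$ fixed, I would observe that $\mathcal{F}_{consistency}$ does not depend on $\bm{H}$ at all, so only the $k$-means-type term in Eq.~(\ref{Eq3}) is affected; the standard monotonicity of Lloyd's algorithm (assignment step and centroid recomputation each weakly decrease the within-cluster sum of squares \cite{KMEANS}) gives $\mathcal{F}_{obj}(\widetilde{\bm{U}}^{(t+1)},\bm{H}^{(t+1)})\le \mathcal{F}_{obj}(\widetilde{\bm{U}}^{(t+1)},\bm{H}^{(t)})$. Chaining the two inequalities yields a monotonically non-increasing sequence.

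Combining monotonicity with the lower bound, the monotone convergence theorem concludes that $\mathcal{F}_{obj}$ converges to some finite limit. I expect the only delicate point to be justifying that the eigen-decomposition step really returns the constrained global minimum of the $\widetilde{\bm{U}}$ subproblem (rather than merely a stationary point), which requires noting that the objective is a trace of a quadratic form and that selecting the eigenvectors associated with the smallest generalized eigenvalues of $\bm{G}$ in Eq.~(\ref{EqG}) minimizes that trace under the relaxed normalization; once that observation is in place, the rest of the argument is routine.
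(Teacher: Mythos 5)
Your proposal is correct and follows essentially the same route as the paper's proof: each alternating step (the closed-form eigen-decomposition update of $\widetilde{\bm{U}}$ with $\bm{H}$ fixed, and the $k$-means update of $\bm{H}$ with $\widetilde{\bm{U}}$ fixed) weakly decreases $\mathcal{F}_{obj}$, which is bounded below by $0$, so the value sequence converges. Your added justification that the eigen-decomposition yields the constrained global minimizer and that $\mathcal{F}_{consistency}$ is independent of $\bm{H}$ simply fills in details the paper states without elaboration.
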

\begin{proof}
In each iteration, when $\widetilde{\bm{U}}$ is fixed,
if $\bm{H}$ is the local minimizer, $k$-means remains $\bm{H}$ unchanged, otherwise it seeks the local minimizer.
When $\bm{H}$ is fixed, $\widetilde{\bm{U}}$ has a closed-form solution which is the global minimizer.
Therefore, the $\mathcal{F}_{obj}$ decreases step by step.
As $\mathcal{F}_{obj}\geq 0$ has a lower bound $0$, it is guaranteed to convergence.
\end{proof}
}

%
\begin{algorithm}[t]\label{AlgCamel}
\scriptsize
\caption{CAMEL}
\SetKwInOut{Input}{Input}
\SetKwInOut{Output}{Output}
\Input{$\widetilde{\bm{X}},K,\epsilon=10^{-8}$}
\Output{$\widetilde{\bm{U}}$}
Conduct $k$-means clustering with respect to
each column of $\widetilde{\bm{X}}$ to initialize $\bm{H}$ according to Eq. (\ref{EqH}) and (\ref{EqColH}). \\
Fix $\bm{H}$ and solve the eigen-decomposition problem described by Eq. (\ref{EqEigenDe}) and (\ref{EqG})
to construct $\widetilde{\bm{U}}$. \\
\While{decrement of $\mathcal{F}_{obj} > \epsilon$ \& maximum iteration unreached}
{
\begin{itemize}[leftmargin=*]
\setlength{\topsep}{1ex}
\setlength{\itemsep}{-0.1ex}
\setlength{\parskip}{0.1\baselineskip}
\vspace{0.1cm}
\item Construct $\bm{Y}$ according to Eq. (\ref{EqY}). \\
\item Fix $\widetilde{\bm{U}}$ and conduct $k$-means clustering with respect to
each column \par of $\bm{Y}$ to update $\bm{H}$ according to Eq. (\ref{EqH}) and (\ref{EqColH}). \\
\item Fix $\bm{H}$ and solve the eigen-decomposition problem described by \par Eq. (\ref{EqEigenDe}) and (\ref{EqG})
to update $\widetilde{\bm{U}}$.
\end{itemize}
}
\end{algorithm}

\section{Experiments}

\subsection{Datasets}

\begin{figure}
\begin{center}
\subfigure[]{
\includegraphics[width=0.137\linewidth]{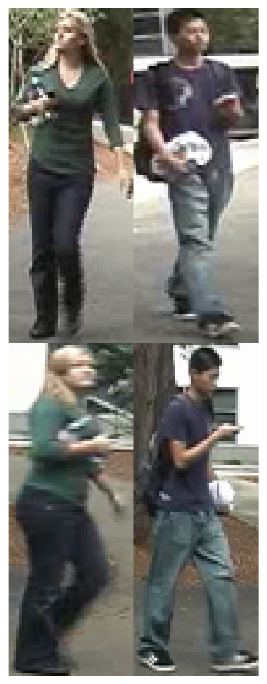}
}
\subfigure[\label{FigDatasetsCUHK01}]{
\includegraphics[width=0.137\linewidth]{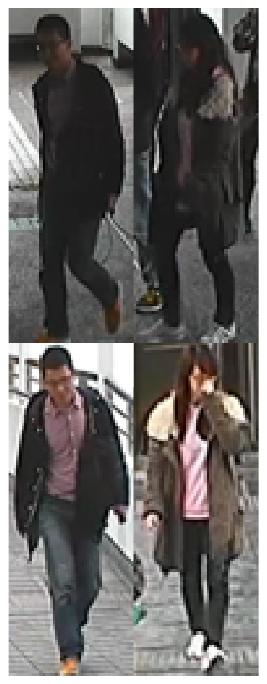}
}
\subfigure[]{
\includegraphics[width=0.137\linewidth]{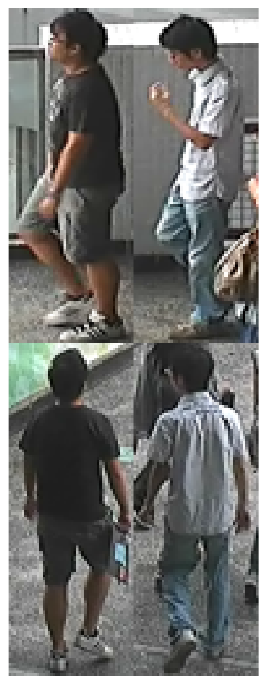}
}
\subfigure[\label{FigDatasetsSYSU}]{
\includegraphics[width=0.137\linewidth]{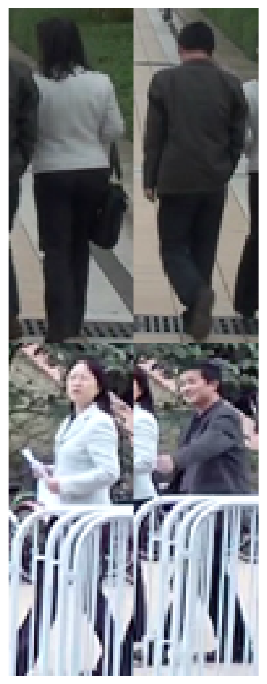}
}
\subfigure[]{
\includegraphics[width=0.137\linewidth]{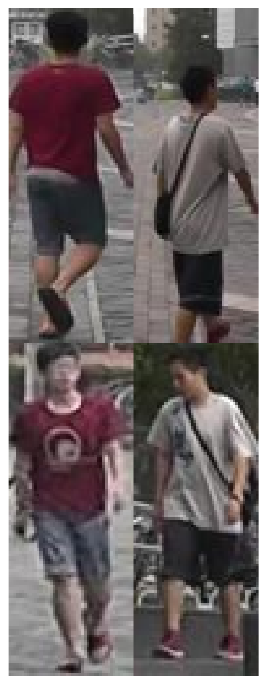}
}
\subfigure[]{
\includegraphics[width=0.137\linewidth]{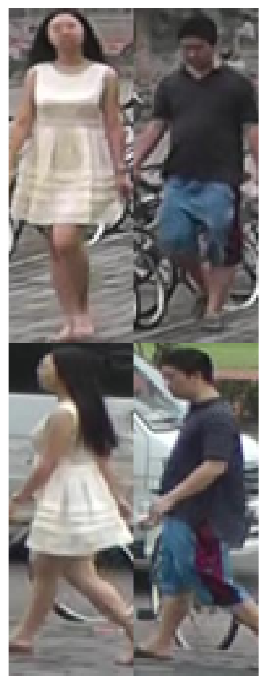}
}
\caption{\label{FigDatasets}Samples of the datasets. Every two images in
a column are from one identity across two disjoint camera views.
(a) VIPeR (b) CUHK01 (c) CUHK03 (d) SYSU (e) Market (f) ExMarket. (Best viewed in color)}
\end{center}
\end{figure}

\begin{table}[t]
\begin{center}
\scriptsize
\begin{tabular}{
>{\centering\arraybackslash}p{1.2cm}
>{\centering\arraybackslash}p{0.5cm}
>{\centering\arraybackslash}p{0.7cm}
>{\centering\arraybackslash}p{0.8cm}
>{\centering\arraybackslash}p{0.7cm}
>{\centering\arraybackslash}p{0.7cm}
>{\centering\arraybackslash}p{0.8cm}}
\toprule
Dataset      & VIPeR & CUHK01 & CUHK03 & SYSU  & Market & ExMarket \\
\midrule
\# Samples & 1,264 & 3,884 & 13,164 & 24,448 & 32,668 & 236,696 \\
\# Views & 2     & 2     & 6     & 2     & 6     & 6 \\
\bottomrule
\end{tabular}%

\caption{\label{TableDatasets}Overview of dataset scales. ``\#'' means ``the number of''.}
\end{center}
\end{table}

Since unsupervised models are more meaningful when the scale of problem
is larger, our experiments were conducted on relatively big datasets
except VIPeR \cite{VIPER} which is small but widely used.
Various degrees of view-specific bias can be observed in all these datasets (see Figure \ref{FigDatasets}).

\noindent \textbf{The VIPeR dataset}  contains 632 identities,
with two images captured from two camera views of each identity.

\noindent \textbf{The CUHK01 dataset} \cite{CUHK01} contains 3,884 images of
971 identities captured from
two disjoint views. There are two images of every identity from each view.

\noindent \textbf{The CUHK03 dataset} \cite{2014_CVPR_CUHK03} contains 13,164 images
of 1,360 pedestrians captured from six surveillance camera views.
Besides hand-cropped images, samples detected
by a state-of-the-art pedestrian detector are provided.

\noindent \textbf{The SYSU dataset} \cite{2015_TCSVT_ASM} includes 24,448 RGB images of 502 persons under two surveillance cameras.
One camera view mainly
captured the frontal or back views of persons, while the other observed mostly
the side views.

\noindent \textbf{The Market-1501 dataset} \cite{2015_ICCV_MARKET} (Market) contains 32,668 images of 1,501 pedestrians, each of which was
captured by at most six cameras. All of the images were cropped by a pedestrian
detector. There are some bad-detected samples in this datasets as distractors
as well.

\noindent \textbf{The ExMarket dataset}\final{\footnote{Demo code for the model and the ExMarket dataset can be found on \url{https://github.com/KovenYu/CAMEL}.}}. In order to evaluate unsupervised RE-ID methods on even larger scale,
which is more realistic, we further combined \textbf{the MARS dataset} \cite{MARS} with
Market. MARS is a video-based RE-ID dataset which contains
20,715 tracklets of 1,261 pedestrians. All the identities from MARS are of a
subset of those from Market.
We then took 20\% frames (each one in every five successive frames) from the tracklets
and combined them with Market to obtain an extended version of Market (\textbf{ExMarket}).
The imbalance between the numbers of samples from the 1,261 persons and other
240 persons makes this dataset more challenging and realistic. There are 236,696 images
in ExMarket in total, and 112,351 images of them are of training set.
A brief overview of the dataset scales can be found in Table \ref{TableDatasets}.

\subsection{Settings}

\noindent \textbf{Experimental protocols}:
A widely adopted protocol was followed on VIPeR in our
experiments \cite{2015_CVPR_LOMO}, i.e., randomly dividing the 632 pairs of images into
two halves, one of which was used as training set and the other as testing set. This
procedure was repeated 10 times to offer average performance.
Only
single-shot experiments were conducted.

The experimental protocol for CUHK01 was the same as that in \cite{2015_CVPR_LOMO}.
We randomly selected 485 persons as training set and the other 486 ones as testing set.
The evaluating procedure was repeated 10 times. Both multi-shot and single-shot
settings were conducted.

The CUHK03 dataset was provided together with its recommended evaluating protocol \cite{2014_CVPR_CUHK03}.
We followed the provided protocol, where images of 1,160 persons were chosen as training set,
images of another 100 persons as
validation set and the remainders as testing set.
This procedure was repeated 20 times.
In our experiments, detected samples were adopted since they
are closer to real-world settings.
Both multi-shot and single-shot experiments were conducted.

As for the SYSU dataset, we randomly picked 251 pedestrians' images as training set
and the others as testing set.
In the testing stage, we basically followed the protocol as in \cite{2015_TCSVT_ASM}. That is,
we randomly chose one and three images of each pedestrian as gallery for single-shot and multi-shot experiments, respectively.
We repeated the testing procedure by 10 times.

Market is somewhat different from others. The evaluation protocol was also
provided along with the data \cite{2015_ICCV_MARKET}. Since the images of one person
came from at most six views, single-shot experiments were not suitable. Instead,
multi-shot experiments were conducted and both cumulative matching characteristic (CMC) and
mean average precision (MAP) were adopted for evaluation \cite{2015_ICCV_MARKET}.
The protocol of ExMarket was identical to that of Market since the identities were
completely the same as we mentioned above.

\noindent \textbf{Data representation}:
In our experiments we used the deep-learning-based JSTL feature proposed in \cite{2016_CVPR_JSTL}.
We implemented it using the 56-layer ResNet \cite{2016_CVPR_resnet}, which
produced $64$-D features.
The original JSTL was adopted to our implementation to extract features on SYSU, Market and ExMarket.
Note that the training set of the original JSTL contained VIPeR, CUHK01 and CUHK03,
violating the unsupervised setting.
So we trained a new JSTL model without VIPeR in its training set to extract
features on VIPeR. The similar procedures were done for CUHK01 and CUHK03.


\noindent \textbf{Parameters}:
We set $\lambda$, the cross-view consistency regularizer, to $0.01$.
We also evaluated the situation when $\lambda$ goes to infinite, i.e.,
the symmetric version of our model in Sec. \ref{SecFurtherEval},
to show how important the asymmetric modelling is.

\Koven{Regarding the parameter $T$ which is the feature dimension after the transformation learned by CAMEL, we set $T$ equal to original feature dimension i.e., $64$, for simplicity. In our experiments, we found that CAMEL can align data distributions across camera views even without performing any further dimension reduction.
This may be due to the fact that, unlike conventional subspace learning models, the transformations learned by CAMEL are view-specific for different camera views and always non-orthogonal. Hence, the learned view-specific transformations can already reduce the discrepancy between the data distributions of different camera views.}

As for $K$, we found that
our model was not sensitive to $K$ when $N\gg K$ and $K$ was not too small
(see Sec. \ref{SecFurtherEval}),
so we set $K = 500$.
These parameters were fixed for all datasets.

\subsection{Comparison}\label{SecFairCmp}

Unsupervised models are more significant when applied on larger datasets.
In order to make comprehensive and fair comparisons, in this section
we compare CAMEL with the most comparable unsupervised models
on six datasets with their scale orders varying from hundreds to hundreds of thousands.
We show the comparative results measured by
the rank-1 accuracies of CMC and MAP (\%)
in Table \ref{TableJSTL}.

\noindent \textbf{Comparison to Related Unsupervised RE-ID Models}.
In this subsection we compare CAMEL with the sparse dictionary learning
model (denoted as Dic) \cite{2015_BMVC_DIC},
sparse representation learning model ISR \cite{2015_PAMI_ISR},
kernel subspace learning model RKSL \cite{2016_ICIP_Wang} and
sparse auto-encoder (SAE) \cite{SAE1,SAE2}.
We tried several sets of parameters for them, and report the best ones.
We also adopt the Euclidean distance which is adopted in the original JSTL paper \cite{2016_CVPR_JSTL} as a baseline (denoted as JSTL).

From Table \ref{TableJSTL}
we can observe that
CAMEL outperforms other models on all the datasets on both settings.
In addition, we can further see from Figure \ref{FigCMC} that CAMEL outperforms other models
at any rank.
One of the main reasons is that the view-specific
interference is noticeable in these datasets. For example, we can see in Figure \ref{FigDatasetsCUHK01} that
on CUHK01, the
changes of illumination are extremely severe and even human beings may have difficulties in
recognizing the identities in those images across views.
This impedes other symmetric models from achieving higher accuracies,
because they potentially hold an assumption that
the invariant and discriminative information can be retained and exploited through a universal
transformation for all views.
But CAMEL relaxes this assumption by
learning an asymmetric metric and then can outperform other models significantly.
In Sec. \ref{SecFurtherEval} we will see the performance of CAMEL would drop much
when it degrades to a symmetric model.


\begin{table}[t]
\scriptsize
\begin{center}
\setlength{\tabcolsep}{0.16cm}
\begin{tabular}{
>{\centering\arraybackslash}p{1.2cm}
>{\centering\arraybackslash}p{0.7cm}
>{\centering\arraybackslash}p{0.8cm}
>{\centering\arraybackslash}p{0.85cm}
>{\centering\arraybackslash}p{0.85cm}
>{\centering\arraybackslash}p{0.85cm}
>{\centering\arraybackslash}p{0.85cm}}
\toprule
Dataset      & VIPeR & CUHK01 & CUHK03 & SYSU  & Market & ExMarket \\
      \midrule
Setting      & SS    & SS/MS  & SS/MS  & SS/MS & MS & MS \\
      \midrule
Dic \begin{tiny}\cite{2015_BMVC_DIC}\end{tiny}   &29.9&49.3/52.9&27.4/36.5&21.3/28.6&50.2(22.7)& 52.2(21.2) \\
ISR \begin{tiny}\cite{2015_PAMI_ISR}\end{tiny}  &27.5 &53.2/55.7 &31.1/38.5& 23.2/33.8& 40.3(14.3)&- \\
RKSL \begin{tiny}\cite{2016_ICIP_Wang}\end{tiny} &25.8 & 45.4/50.1 &25.8/34.8 &17.6/23.0 &34.0(11.0) &- \\
SAE \begin{tiny}\cite{SAE1}\end{tiny}  &20.7 &45.3/49.9 &21.2/30.5 &18.0/24.2 &42.4(16.2) &44.0(15.1) \\
JSTL \begin{tiny}\cite{2016_CVPR_JSTL}\end{tiny} &25.7 &46.3/50.6 &24.7/33.2 &19.9/25.6 &44.7(18.4) &46.4(16.7)\\
\midrule
AML \begin{tiny}\cite{2007_CVPR_AML}\end{tiny}  &23.1 &46.8/51.1 &22.2/31.4 &20.9/26.4 &44.7(18.4) &46.2(16.2) \\
UsNCA \begin{tiny}\cite{2015_NC_uNCA}\end{tiny} &24.3 &47.0/51.7 &19.8/29.6 &21.1/27.2 &45.2(18.9) &- \\
\midrule
CAMEL & \textbf{30.9} & \textbf{57.3/61.9} & \textbf{31.9/39.4} & \textbf{30.8/36.8} & \textbf{54.5}(\textbf{26.3}) & \textbf{55.9}(\textbf{23.9})  \\
\bottomrule
\end{tabular}%

\caption{\label{TableJSTL}Comparative results of unsupervised models on the six datasets, measured by
rank-1 accuracies and MAP (\%).
``-'' means prohibitive time consumption due to time complexities of the models.
``SS'' represents single-shot setting and ``MS'' represents multi-shot setting.
For Market and ExMarket, MAP is also provided in the parentheses due to the
requirement in the protocol \cite{2015_ICCV_MARKET}.
Such a format is also applied in the following tables.}
\end{center}
\end{table}

\begin{figure}
\begin{center}
\subfigure[VIPeR]{
\includegraphics[width=0.47\linewidth]{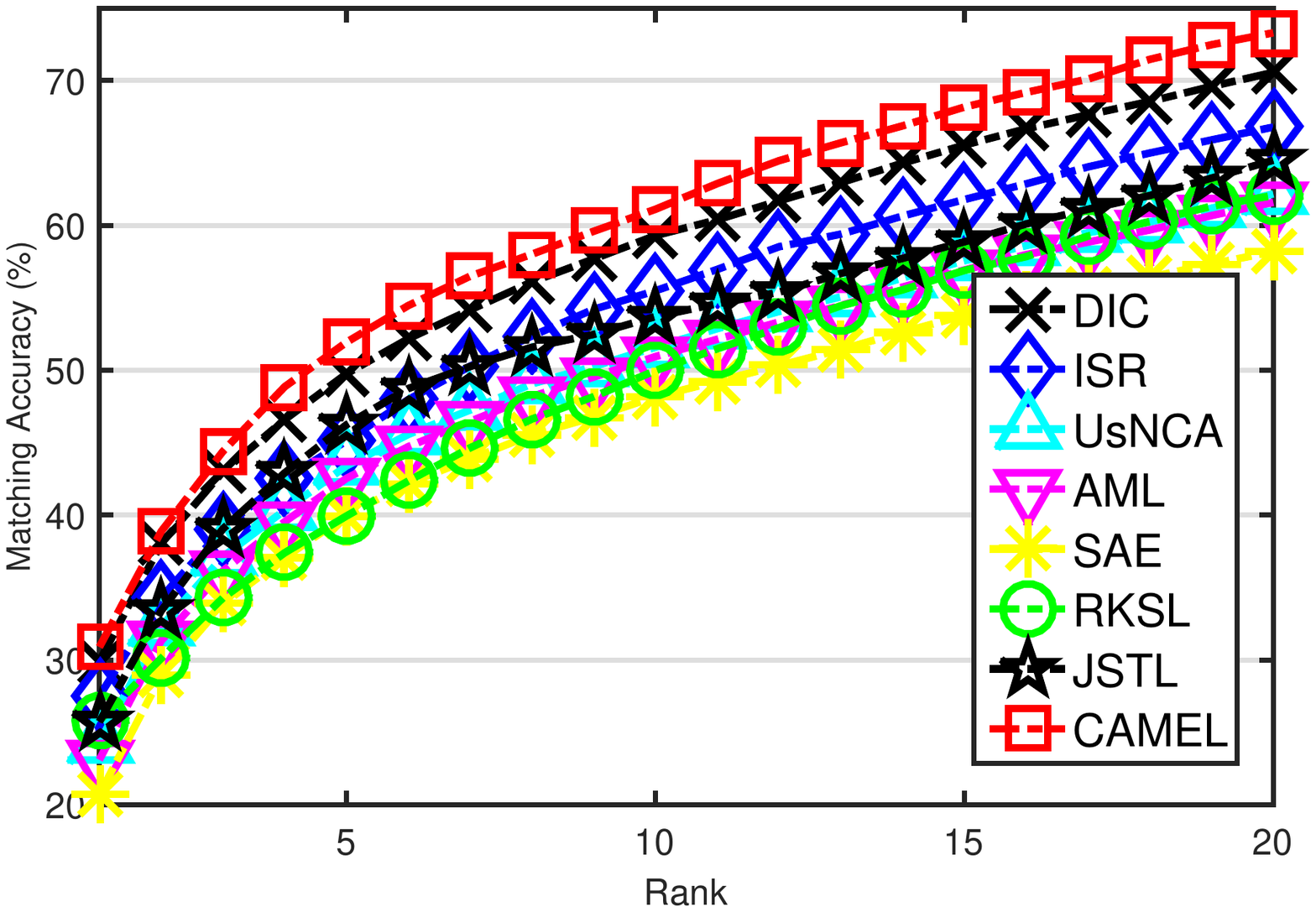}
}
\subfigure[CUHK01]{
\includegraphics[width=0.47\linewidth]{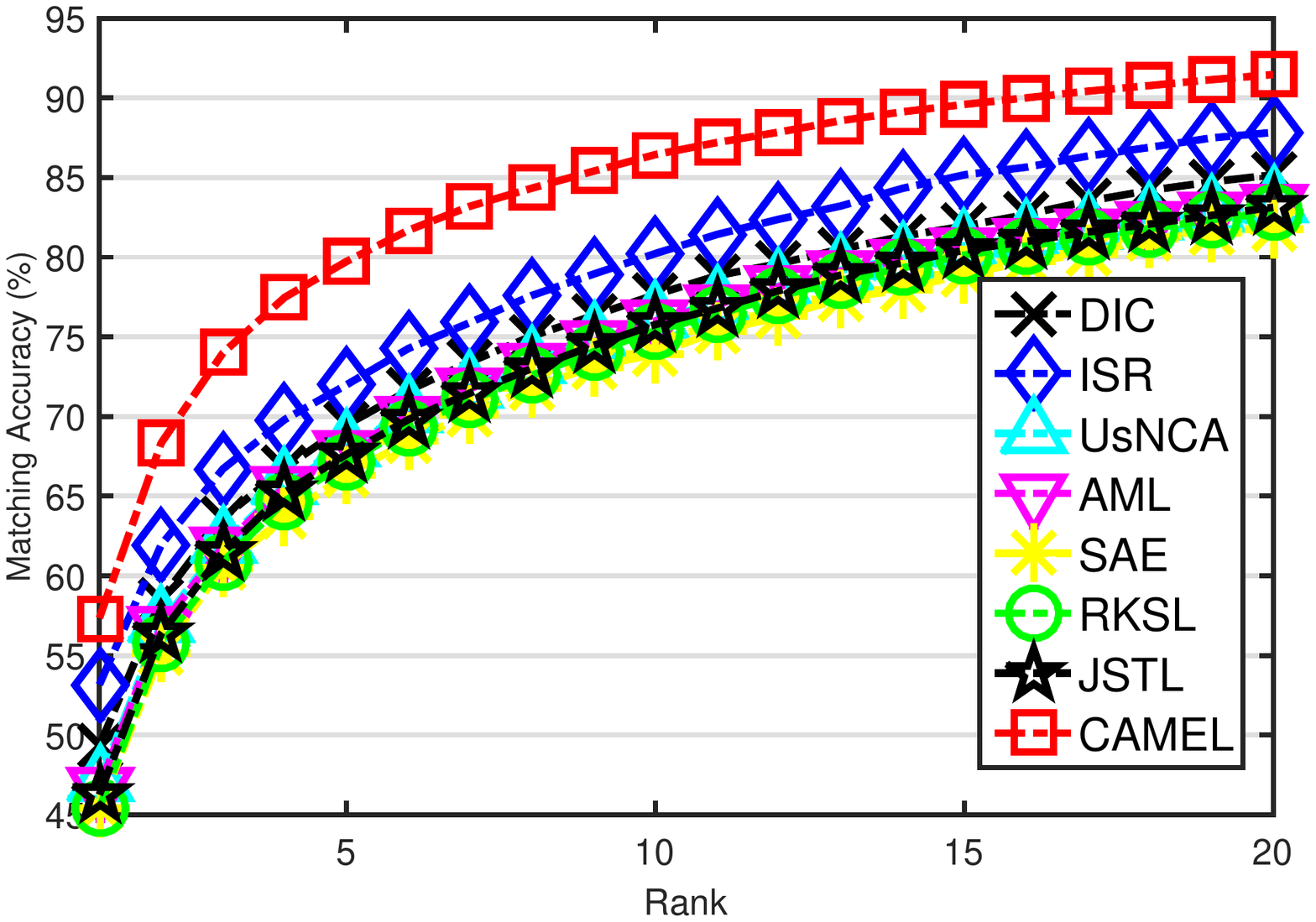}
}
\subfigure[CUHK03]{
\includegraphics[width=0.47\linewidth]{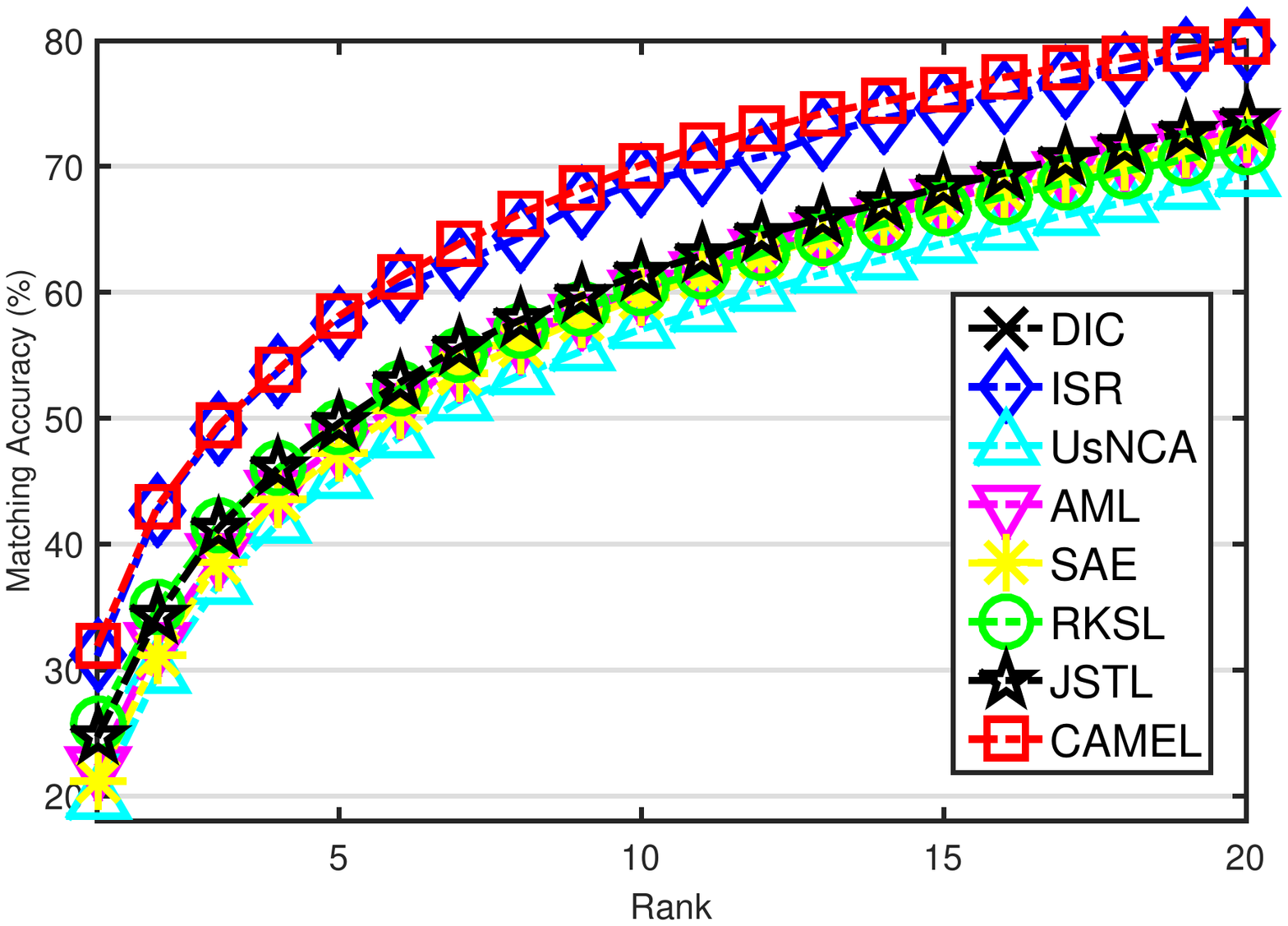}
}
\subfigure[SYSU]{
\includegraphics[width=0.47\linewidth]{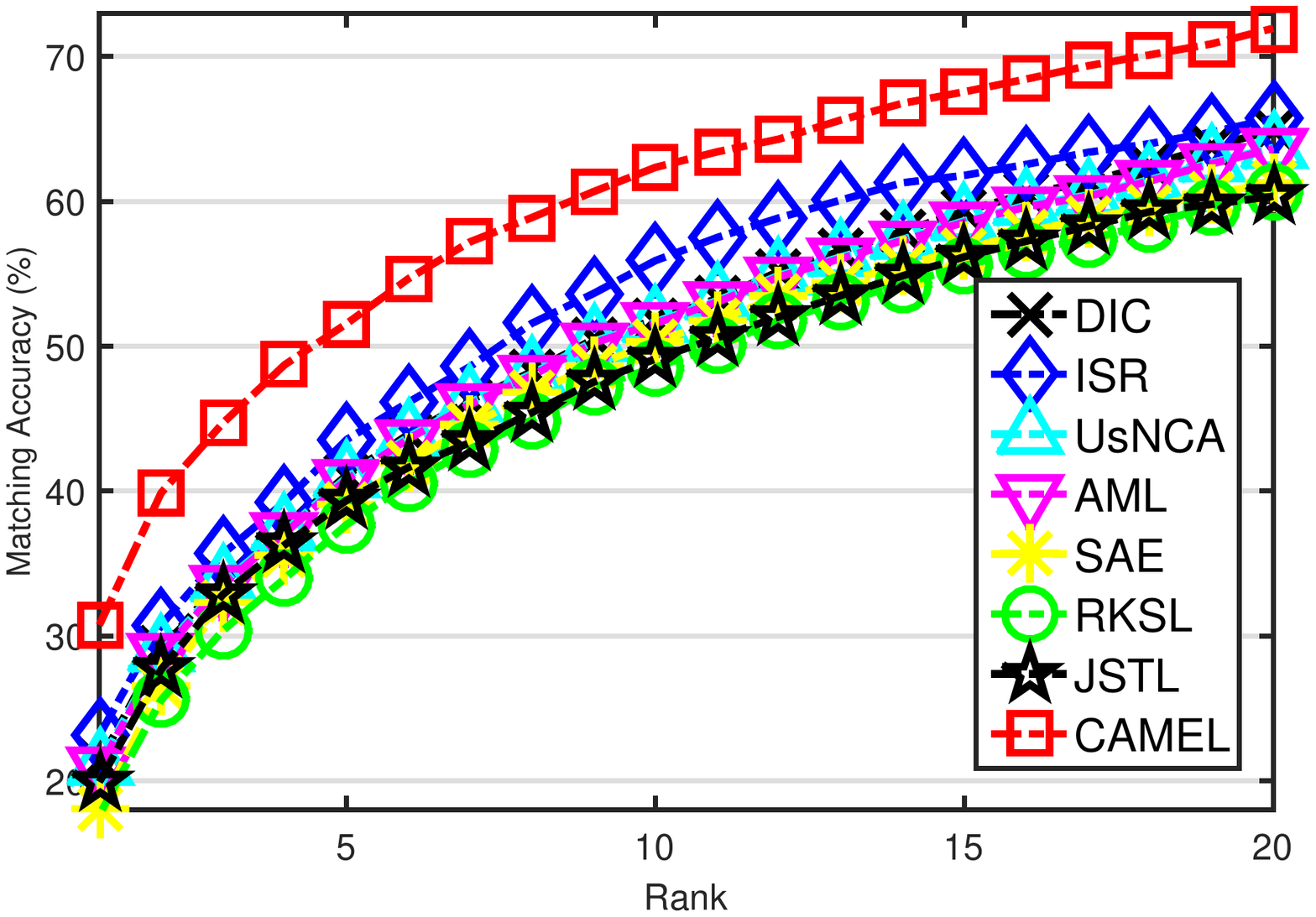}
}
\subfigure[Market]{
\includegraphics[width=0.47\linewidth]{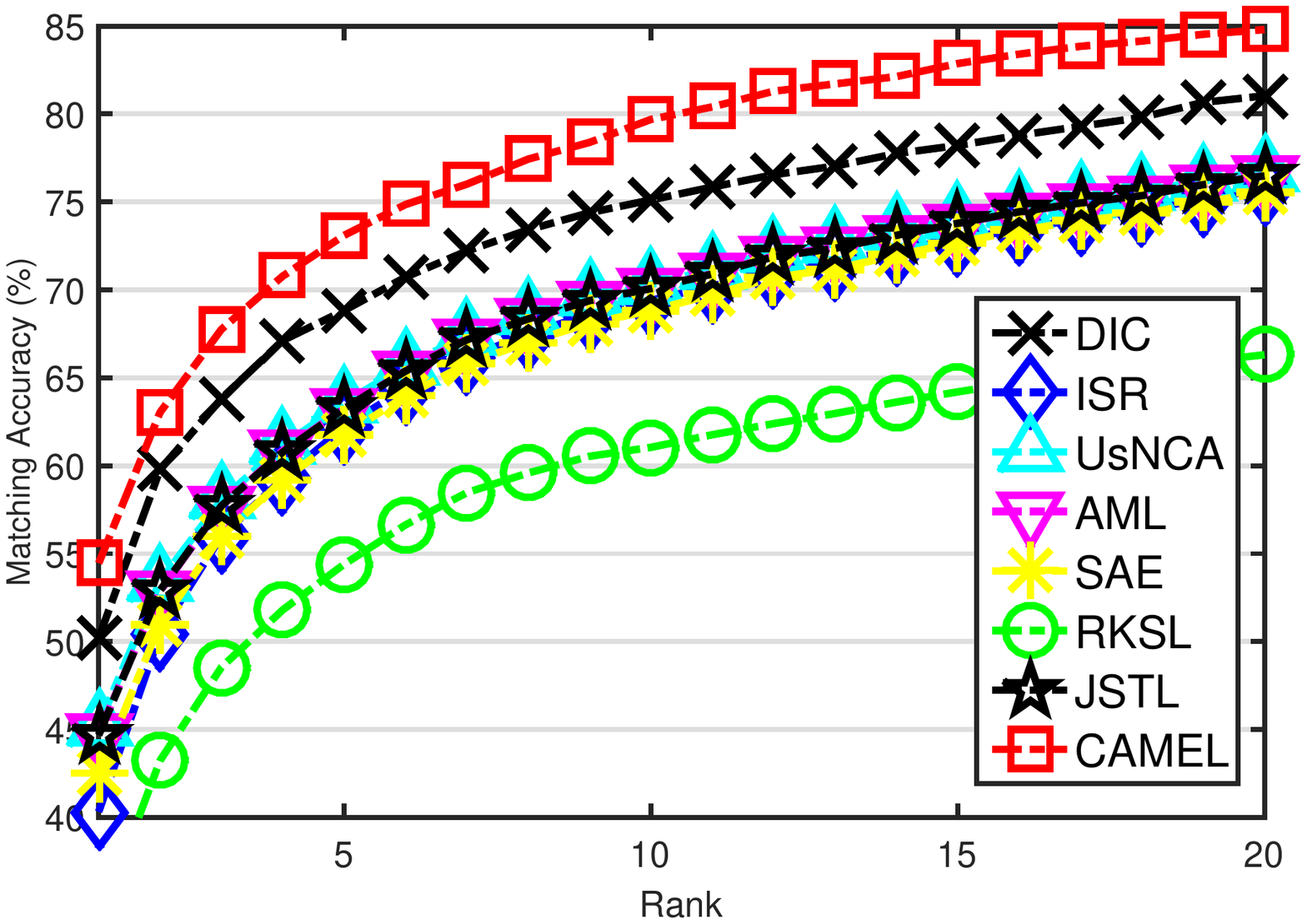}
}
\subfigure[ExMarket]{
\includegraphics[width=0.47\linewidth]{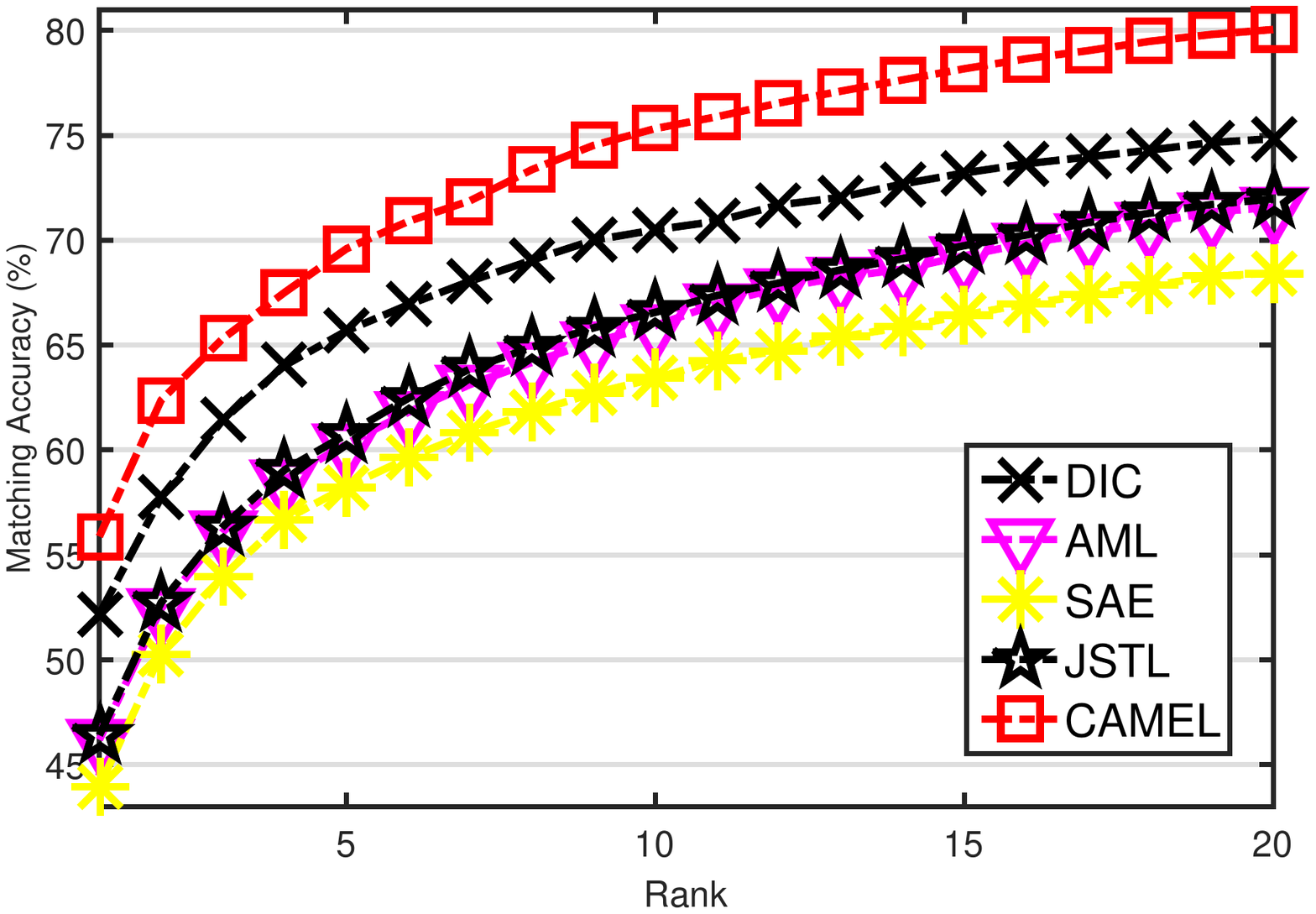}
}
\caption{\label{FigCMC}CMC curves.
\modify{For CUHK01, CUHK03 and SYSU, we take the results under single-shot setting as examples.
Similar patterns can be observed on multi-shot setting.}}

\end{center}
\end{figure}


\noindent \textbf{Comparison to Clustering-based Metric Learning Models}.
In this subsection we compare CAMEL with
a typical model AML \cite{2007_CVPR_AML} and a recently proposed model UsNCA \cite{2015_NC_uNCA}.
We can see from Fig. \ref{FigCMC} and Table \ref{TableJSTL} that compared to them, CAMEL achieves
noticeable improvements on all the six datasets.
One of the major reasons is that
they do not consider the view-specific bias which can
be very disturbing in clustering, making them unsuitable for RE-ID problem.
\ws{In comparison}, CAMEL alleviates such disturbances by asymmetrically modelling.
This factor contributes to the much better performance of CAMEL.

\noindent \textbf{Comparison to the State-of-the-Art.}
In the last subsections, we compared with existing unsupervised RE-ID methods using the same features.
In this part, we also compare with the results reported in literatures.
Note that most existing unsupervised RE-ID methods have not been evaluated on large datasets like CUHK03, SYSU, or Market,
so Table \ref{TableSotA} only reports the comparative results
on VIPeR and CUHK01.
We additionally compared existing unsupervised RE-ID models, including the
hand-craft-feature-based SDALF \cite{2010_CVPR_SDALF} and CPS \cite{CAVIAR},
the transfer-learning-based UDML \cite{2016_CVPR_tDIC},
graph-learning-based model (denoted as GL) \cite{2016_ECCV_Kodirov},
and local-salience-learning-based GTS \cite{2014_BMVC_GTS} and SDC \cite{2013_CVPR_SALIENCE}.
We can observe from Table \ref{TableSotA} that
our model CAMEL can outperform the state-of-the-art by large margins on CUHK01.

\begin{table}[t]
\begin{center}
\scriptsize
\begin{tabular}{cccccccc}
\toprule
Model     & SDALF   & CPS & UDML
& GL  & GTS
& SDC  & CAMEL \\
&\cite{2010_CVPR_SDALF} &\cite{CAVIAR} &\cite{2016_CVPR_tDIC} &\cite{2016_ECCV_Kodirov}  &\cite{2014_BMVC_GTS} & \cite{2013_CVPR_SALIENCE} & \\
\midrule
VIPeR & 19.9  & 22.0  & 31.5  & \textbf{33.5}  & 25.2    & 25.8 & 30.9 \\
CUHK01 & 9.9  & -  & 27.1  & 41.0 & -          & 26.6 & \textbf{57.3} \\
\bottomrule
\end{tabular}%

\caption{\label{TableSotA}Results compared to the state-of-the-art reported in literatures, measured by rank-1 accuracies (\%). ``-'' means no reported result.}

\end{center}
\end{table}

\noindent \textbf{Comparison to Supervised Models.}
Finally, in order to see how well CAMEL can approximate the performance of supervised RE-ID,
\Koven{we additionally compare CAMEL with its supervised version (denoted as CAMEL$_s$) which is easily derived by substituting the clustering results by true labels, and three standard supervised models,
including the widely used KISSME \cite{2012_CVPR_KISSME}, XQDA \cite{2015_CVPR_LOMO}, the asymmetric distance model CVDCA \cite{2015_TCSVT_ASM}.
The results are shown in Table \ref{TableSupervised}.
We can see that CAMEL$_s$ outperforms CAMEL by various degrees,
indicating that label information can further improve CAMEL's performance.
Also from Table \ref{TableSupervised}, we notice that CAMEL can be comparable to other standard supervised models on some datasets like CUHK01,
and even outperform some of them.}
It is probably because the used JSTL model had not been fine-tuned on the target datasets: this was for a fair comparison with unsupervised models which work on completely unlabelled training data.
Nevertheless, this suggests that the performance of CAMEL may not be far below the standard supervised RE-ID models.

\begin{table}[t]
\scriptsize
\setlength{\tabcolsep}{0.11cm}
\begin{tabular}{ccccccc}
\toprule
Dataset      & VIPeR & CUHK01 & CUHK03 & SYSU  & Market & ExMarket \\
\midrule
Setting      & SS    & SS/MS  & SS/MS  & SS/MS & MS & MS \\
      \midrule
KISSME \begin{tiny}\cite{2012_CVPR_KISSME}\end{tiny} &28.4&53.0/57.1&37.8/45.4&24.7/31.8&51.1(24.5)& 48.0(18.3) \\
XQDA \begin{tiny}\cite{2015_CVPR_LOMO}\end{tiny}  &28.9&54.3/58.2&36.7/43.7&25.2/31.7&50.8(24.4)& 47.4(18.1) \\
CVDCA \begin{tiny}\cite{2015_TCSVT_ASM}\end{tiny} &\textbf{37.6}&57.1/60.9&37.0/44.6&31.1/\textbf{38.9}&52.6(25.3)&51.5(22.6) \\
CAMEL$_s$ &33.7&\textbf{58.5/62.7}&\textbf{45.1/53.5}&\textbf{31.6}/37.6&\textbf{55.0}(\textbf{27.1})& \textbf{56.1}(\textbf{24.1}) \\
\midrule
CAMEL & 30.9 & 57.3/61.9 & 31.9/39.4 & 30.8/36.8 &54.5(26.3) & 55.9(23.9) \\
\bottomrule
\end{tabular}%

\caption{\label{TableSupervised}Results compared to supervised models using the same JSTL features.}
\end{table}

\subsection{Further Evaluations}\label{SecFurtherEval}

\noindent \textbf{The Role of Asymmetric Modeling}.
We show what is going to happen if CAMEL degrades to a common symmetric model
in Table \ref{TableSym}. Apparently, without asymmetrically modelling each camera view,
our model would be worsen largely, indicating that the asymmetric modeling for clustering
is rather important for addressing the cross-view matching problem in RE-ID as well as in our model.

\noindent \textbf{Sensitivity to the Number of Clustering Centroids}. We take
CUHK01, Market and ExMarket datasets as examples of different scales (see Table \ref{TableDatasets}) for this evaluation.
Table \ref{TableK} shows how the performance varies with different numbers of clustering centroids, $K$.
It is obvious that the performance
only fluctuates mildly when $N \gg K$ and $K$ is not too small.
Therefore CAMEL is not very sensitive to $K$ especially when applied to large-scale problems.
\final{To further explore the reason behind,
we show in Table \ref{table:rate} the rate of clusters which contain more than one persons,
in the initial stage and convergence stage in Algorithm \ref{AlgCamel}.
We can see that \emph{(1)} in spite of that $K$ is varying,
there is always a number of clusters containing more than one persons in both the initial stage and convergence stage.
This indicates that our model works \emph{without} the requirement of perfect clustering results.
And \emph{(2)}, although the number is various,
in the convergence stage the number is consistently decreased compared to initialization stage.
This shows that the cluster results are improved consistently.
These two observations suggests that
the clustering should be a mean to learn the asymmetric metric, rather than an ultimate objective.}

\modify{
\noindent \textbf{Adaptation Ability to Different Features}.
At last, we show that CAMEL can be effective not only when adopting deep-learning-based JSTL features.
We additionally adopted the hand-crafted LOMO feature proposed in \cite{2015_CVPR_LOMO}.
We performed PCA to produce $512$-D LOMO features, and the results are shown in Table \ref{TableLOMO}.
Among all the models, the results of Dic and ISR are the most comparable (Dic and ISR take all second places). So for clarity, we only compare CAMEL with them and $L_2$ distance as baseline.
From the table we can see that CAMEL can outperform them.
}

\begin{table}[t]
\begin{center}
\scriptsize
\setlength{\tabcolsep}{0.11cm}
\begin{tabular}{ccccccc}
\toprule
Dataset      & VIPeR & CUHK01 & CUHK03 & SYSU  & Market & ExMarket \\
\midrule
Setting      & SS    & SS/MS  & SS/MS  & SS/MS & MS & MS \\
      \midrule
CMEL & 27.5  & 52.5/54.9 & 29.8/37.5 & 25.4/30.9 & 47.6(21.5) & 48.7(20.0) \\
CAMEL & \textbf{30.9} & \textbf{57.3/61.9} & \textbf{31.9/39.4} & \textbf{30.8/36.8} & \textbf{54.5}(\textbf{26.3}) & \textbf{55.9}(\textbf{23.9}) \\
\bottomrule
\end{tabular}%

\caption{\label{TableSym}Performances of CAMEL compared to its symmetric version, denoted as CMEL.}
\end{center}
\end{table}

\begin{table}[t]
\begin{center}
\scriptsize
\begin{tabular}{cccccc}
\toprule
K     & 250   & 500   & 750   & 1000  & 1250 \\
\midrule
CUHK01 & 56.59 & 57.35 & 56.26 & 55.12 & 52.75 \\
Market & 54.48 & 54.45 & 54.54 & 54.48 & 54.48 \\
ExMarket & 55.49 & 55.87 & 56.17 & 55.93 & 55.67 \\
\bottomrule
\end{tabular}%

\caption{\label{TableK}Performances of CAMEL when the number of clusters, K, varies.
Measured by single-shot rank-1 accuracies (\%) for CUHK01 and multi-shot for Market and ExMarket.}
\end{center}
\end{table}

\begin{table}[t]
\begin{center}
\scriptsize
\begin{tabular}{cccccc}
\toprule
K     & 250   & 500   & 750   & 1000  & 1250 \\
\midrule
Initial Stage & 77.6\% & 57.0\% & 26.3\% & 11.6\% & 6.0\% \\
Convergence Stage & 55.8\% & 34.3\% & 18.2\% & 7.2\%  & 4.8\% \\
\bottomrule
\end{tabular}%
\caption{\label{table:rate}
Rate of clusters containing similar persons on CUHK01.
Similar trend can be observed on other datasets.}
\end{center}
\end{table}

\begin{table}[t]
\begin{center}
\scriptsize
\setlength{\tabcolsep}{0.16cm}
\begin{tabular}{
>{\centering\arraybackslash}p{1.2cm}
>{\centering\arraybackslash}p{0.7cm}
>{\centering\arraybackslash}p{0.8cm}
>{\centering\arraybackslash}p{0.85cm}
>{\centering\arraybackslash}p{0.85cm}
>{\centering\arraybackslash}p{0.85cm}
>{\centering\arraybackslash}p{0.85cm}}
\toprule
Dataset      & VIPeR & CUHK01 & CUHK03 & SYSU  & Market & ExMarket \\
      \midrule
Setting      & SS    & SS/MS  & SS/MS  & SS/MS & MS & MS \\
\midrule
Dic \begin{tiny}\cite{2015_BMVC_DIC}\end{tiny}  & 15.8  & 19.6/23.6 & 8.6/13.4 & 14.2/24.4 & 32.8(12.2) & 33.8(12.2) \\
ISR \begin{tiny}\cite{2015_PAMI_ISR}\end{tiny}  & 20.8  & 22.2/27.1 & 16.7/20.7 & 11.7/21.6 & 29.7(11.0) & - \\
$L_2$                                           & 11.6  & 14.0/18.6 & 7.6/11.6 & 10.8/18.9 & 27.4(8.3) & 27.7(8.0) \\
\midrule
CAMEL & \textbf{26.4} & \textbf{30.0/36.2} & \textbf{17.3/23.4} & \textbf{23.6/35.6} & \textbf{41.4(14.1)} & \textbf{42.2(13.7)} \\
\bottomrule
\end{tabular}%
\caption{\label{TableLOMO}Results using $512$-D LOMO features.}
\end{center}
\end{table}

\section{Conclusion}

In this work, we have shown that metric learning can be effective for unsupervised RE-ID by proposing
clustering-based asymmetric metric learning called CAMEL. \ws{CAMEL learns view-specific projections
to deal with view-specific interference, and this is based on existing clustering (e.g., the $k$-means model demonstrated in this work)
on RE-ID unlabelled data, resulting in an asymmetric metric clustering.
}
Extensive experiments show that our model can outperform
existing ones in general, especially on large-scale unlabelled RE-ID datasets.

\section*{Acknowledgement}
This work was supported partially by the National Key Research and Development Program of China (2016YFB1001002), NSFC(61522115, 61472456, 61573387, 61661130157, U1611461), the Royal Society Newton Advanced Fellowship (NA150459), Guangdong Province Science and Technology Innovation Leading Talents (2016TX03X157).
{\small
\bibliographystyle{ieee}
\bibliography{Koven}
}

\end{document}